\DeclareMathOperator*{\argmin}{arg\,min}
\newtheorem{theorem}{Theorem}
\ifcvprfinal\pagestyle{empty}\fi
\begin{document}

\title{Robustness of 3D Deep Learning in an Adversarial Setting}

\author{Matthew Wicker\\
University of Oxford\\
{\tt\small matthew.wicker@cs.ox.ac.uk}
\and
Marta Kwiatkowska\\
University of Oxford\\
{\tt\small marta.kwiatkowska@cs.ox.ac.uk}
}

\maketitle

\begin{abstract}
Understanding the spatial arrangement and nature of real-world objects is of paramount importance to many complex engineering tasks, including autonomous navigation. Deep learning has revolutionized state-of-the-art performance for tasks in 3D environments; however, relatively little is known about the robustness of these approaches in an adversarial setting. The lack of comprehensive analysis makes it difficult to justify deployment of 3D deep learning models in real-world, safety-critical applications. In this work, we develop an algorithm for analysis of 
pointwise robustness of neural networks that operate on 3D data. We show that current approaches presented for understanding the resilience of state-of-the-art models vastly overestimate their robustness. We then use our algorithm to evaluate an array of state-of-the-art models in order to demonstrate their vulnerability to occlusion attacks. We show that, in the worst case, these networks can be reduced to 0\% classification accuracy after the occlusion of at most 6.5\% of the occupied input space.
\end{abstract}

\section{Introduction}

Over the past several years, the machine learning community has worked to adapt the success of deep 2D vision algorithms to the 3D setting. Though initially slow to 
reach the performance
level of its 2D counterpart, recent advances have increased the accuracy of 3D deep learning pipelines by around 18\% on the ModelNet10 and ModelNet40 benchmarks \cite{ModelNet}. Now that 3D deep learning algorithms are able to achieve remarkable performance on standard benchmarks (currently topping out at 95\% accuracy), there have been many encouraging attempts to adapt these models to real-time, safety-critical scenarios such as landing zone detection for airborne drones \cite{dronelanding} and object recognition and classification for autonomous vehicles \cite{FrustrumPointNets, VoxelNet, multiview-driving}. In spite of the recent developments, the robustness of these pipelines remains poorly understood.

\begin{figure}[t]
\begin{center}
 \includegraphics[width=0.75\linewidth]{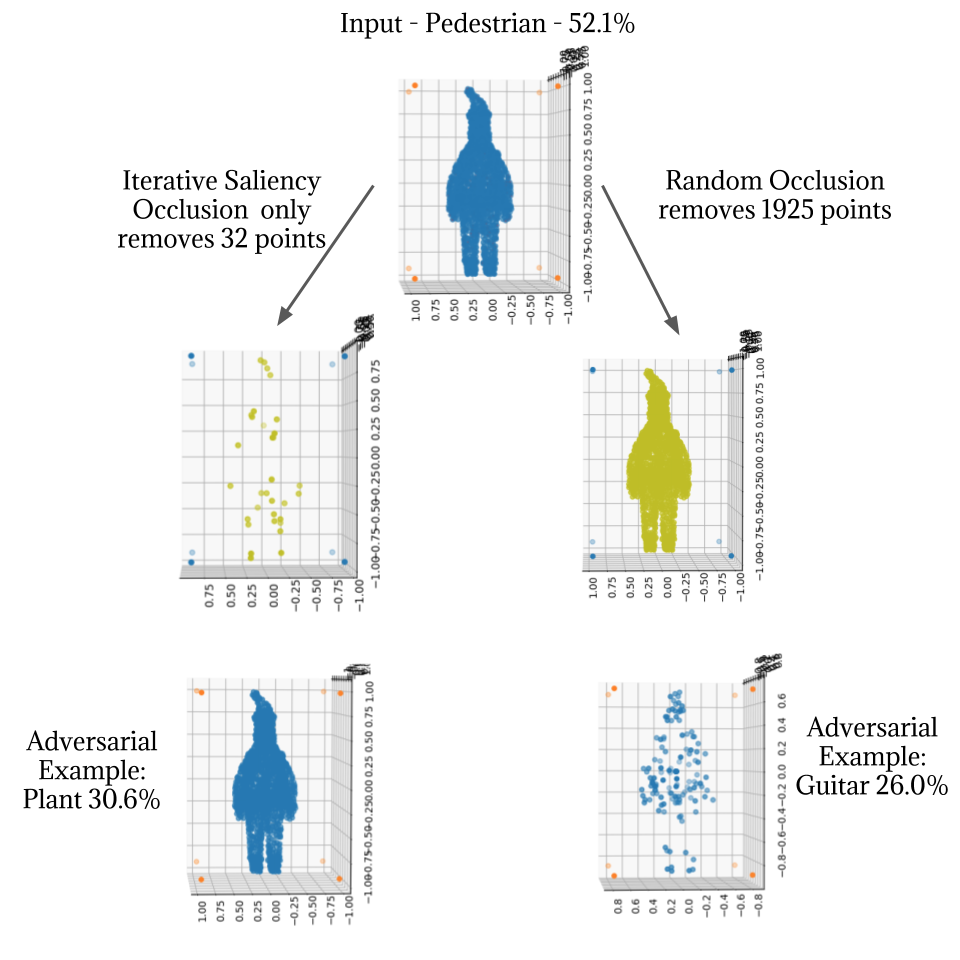}
\end{center}
\vspace*{-0.7cm}
   \caption{Despite the fact that PointNet \cite{PointNetOrig} is able to achieve high accuracy on the ModelNet40 test set, we show that by exploiting the low cardinality of the 
   induced critical point set we can cause the network to misclassify a man wearing a winter coat and beanie as a plant after only 32 out of 2048 points have been removed from the point cloud.}
\label{fig:pedestrian}
\end{figure}

\begin{figure*}
\begin{center}
\includegraphics[width=0.70\textwidth]{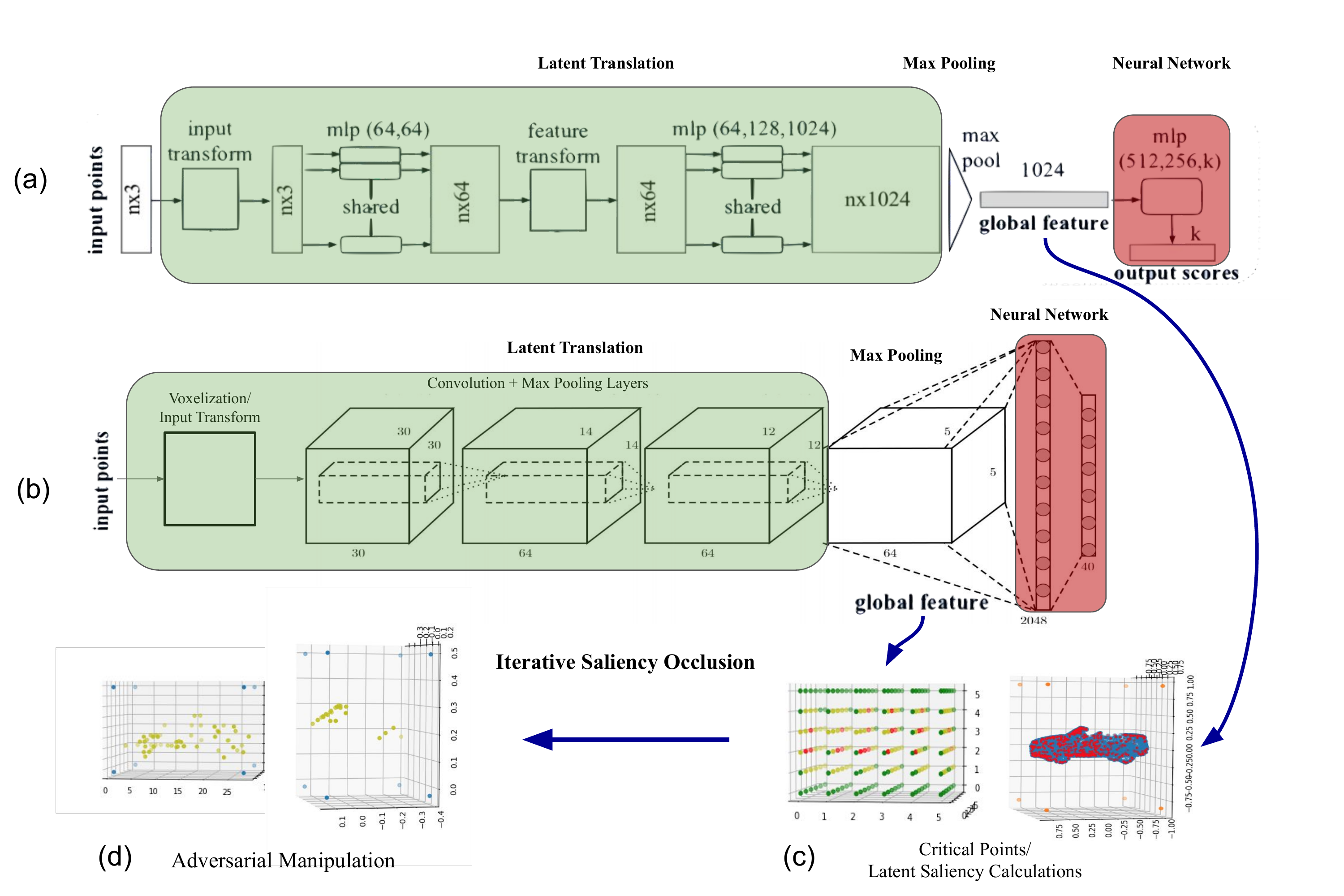}
\end{center}
\vspace*{-0.7cm}
   \caption{We outline a high-level unification of state-of-the-art 3D deep learning pipelines. (a) Figure from \cite{PointNetOrig} that demonstrates the PointNet architecture. (b) Figure from \cite{FusionNet} that demonstrates the architecture of a specific volumetric network. (c) Representations of salience pulled from point cloud and volumetric networks that are then used to generate (d) minimal adversarial manipulations. }
\label{fig:background}
\end{figure*}

The rapidly growing 
literature on the robustness (or lack thereof) of 2D vision algorithms casts doubt on the stability of their 3D relatives. In this work, we demonstrate the lack of robustness of 3D deep learning to adversarial occlusion, despite the results of random input occlusion suggesting that they are relatively invariant to perturbation. Developing comprehensive testing methods for these systems is of paramount importance given that misclassification of pedestrians (which is demonstrated in Figure \ref{fig:pedestrian}) as plants can induce poor planning by autonomous vehicles, 
an issue highlighted by a fatality
in the real world \cite{NYT-Uber}.

In the past few years, many methods for crafting adversarial examples have emerged \cite{propertiesOfNeuralNetworks, MadryPGD, JSMA, FGBBST}, including several frameworks for verification of safety \cite{reluplex, safetyVerification}. 
In contrast to studies of weaknesses
in deep learning models for image recognition due to their susceptibility to adversarial examples, very little work has been done to understand the robustness of 3D deep learning pipelines. This is in part because the data representation schemes utilized by 3D deep learning algorithms are not amenable to many current robustness analysis tools (see discussion in Section \ref{sec:safetyofdeeplearning}). By unifying the structure of foundational 3D deep learning pipelines, we solve this problem and lay the groundwork for future methods of safety testing and verification for these systems.

The value of understanding the robustness of 3D deep learning pipelines not only stems from their potential use in safety-critical systems (such as self-driving cars) but also the very noisy and unpredictable nature of collecting 3D data. Rarely will models receive full information about an object they are trying to recognize; rather, they will receive a single-angle, partially-occluded view of an object. In this work, we formalize a framework for moving towards an understanding of the performance of a wide spectrum of 3D deep learning algorithms. The efficient algorithm we propose could easily be used to further understand and improve the robustness and performance of future 3D deep learning algorithms. To this end, we offer several novel contributions to the study of 3D deep learning:

\begin{itemize}
  \item We present a unified view of volumetric and order-invariant 3D deep learning pipelines to exploit sensitivities to small changes in their inputs.
  \item We develop a novel algorithm to craft adversarial examples and provide guarantees about quality and existence of misclassifications.
  \item We use our algorithm to give a systematic occlusion analysis of the robustness of 3D deep learning algorithms that employ volumetric and point-cloud representations.\footnote{Code for all experiments in this paper can be found at https://github.com/matthewwicker/IterativeSalienceOcclusion}
\end{itemize}

We begin by briefly covering the the pertinent background for both 3D deep learning and the robustness analysis of deep learning algorithms. Following preliminaries, we formalize the problem of evaluating pointwise robustness under adversarial occlusion. After stating the problem, we give an algorithm that can operate in both the white-box and black-box settings. Further, we prove that this algorithm can give guarantees about the existence of adversarial examples. Finally, we use the algorithm to attack several state-of-the-art models in 3D deep learning. 

\section{Background}

In this section we aim to give an overview of the field of 3D deep learning as well as the state of the art of robustness analysis for deep learning algorithms.

\subsection{3D Deep Learning}\label{sec:3DDL}

The current renaissance of 3D deep learning methods can be attributed to both the wide availability of cheap sensors for collecting 3D data and the release of large standard datasets of 3D objects \cite{SydneyDataset, ModelNet, ALargeDatasetOfObjectScans}. Thanks to these datasets, 3D deep learning has enjoyed increased attention from machine learning practitioners. This has lead to an impressive leap in performance on standard benchmarks. Much of this progress can be attributed to novel data representation schemes, detailed below. 

\textbf{Volumetric Representations} One of the first methods for deep 3D shape classification called \textit{ShapeNet} \cite{ModelNet} achieved 77\% accuracy by representing data in a volumetric fashion. Volumetric representation of 3D shapes involves passing in a discretized 3D tensor (typically a cube), where the value of each entry in the 3D tensor represents the probability that an object inhabits that space. ShapeNet was surpassed by another network utilizing a volumetric approach to shape classification named \textit{VoxNet} \cite{VoxNet2015}, which, utilizing computationally expensive 3D convolutions, was able to achieve 83\% classification accuracy on the ModelNet benchmarks. 

Volumetric approaches have continued to find success outside the standard object recognition tasks and have been used in both landing site recognition for drones \cite{dronelanding} and in the classification of already localized objects in 3D driving scenes \cite{VoxelNet}.

\textbf{Multi-View Representations} \textit{Multi-View} networks take in a full 3D model of an object and from the model generate a series of 2D RGB images which are fed into 2D vision algorithms in order to arrive at a classification. Multi-view approaches in object classification \cite{MultiViewCNN2015} have remained consistently  state-of-the-art in terms of accuracy. However, the use of these networks in real-time scene and object recognition is non-trivial, and in some cases impossible, due to their inherent need for full 3D information of objects for classification. 

\textbf{Point-cloud representation} The recent seminal work by Qi et. al. \cite{PointNetOrig} (extended in \cite{Wang2018DGCNN} and \cite{Qi2017PointNetPlus}) uses neural networks to learn a point-set function that directly takes inputs from sensors (point clouds) and is able to classify them without the need for expensive operations such as conversion to more inflated domains (as is the case with volumetric and multi-view representation schemes) or 3D convolutions. These networks are able to achieve similar or better classification accuracy when compared to volumetric approaches, and their efficiency is unmatched.

PointNets have been successful in a myriad of different classification and segmentation tasks. Perhaps most interesting for this work is their use in the recognition of objects in scenes taken from self-driving cars \cite{FrustrumPointNets}. Previous work in point cloud recognition was completed without deep learning in \cite{Wang2015RSS}. 

\  \\

In this work we do not consider multi-view representations. Firstly, multi-view representations convert 3D data to a collection of 2D images, thus making them compatible with existing methods for robustness analysis of image classifiers (e.g. \cite{JSMA, CW-attacks, propertiesOfNeuralNetworks, MadryPGD}). Further, multi-view networks require full 3D information about an object under consideration which is rarely available when operating in a real-time scenario.
\footnote{Despite using the term multi-view, \cite{multiview-driving} is really referring to a fusion of multi-modal views, not multiple unimodal views.} The difficulty with the simultaneous analysis of these approaches is their vastly different architectural composition. In order to rectify this, we will unify both approaches under the following framework (which holds true for volumetric and order-invariant network architectures): 

\centerline{Data $\mapsto$ Latent Translation $\mapsto$ Pooling $\mapsto$ FCN}

\noindent
where FCN stands for fully connected network and refers to a neural network with potentially several layers of neurons which are fully connected. The specifics of this unifying framework for 3D deep learning will be presented in detail in Section \ref{sec:theory} and examples of different representations \cite{VoxNet2015,PointNetOrig} are given in Figure \ref{fig:background}.

\subsection{Safety of Deep Learning}\label{sec:safetyofdeeplearning}

The phenomenon of adversarial examples has provoked a growing concern about the safety of deep learning algorithms. In general we can split methods for crafting adversarial examples into classes based on the threat model (i.e. setting of the adversary, see \cite{BlackBoxAttack} for a thorough treatment) and properties of the examples found.

Attacks are split into \textit{white-box} algorithms and \textit{black-box} algorithms depending on what facets of the model an adversary has access to. We say that an algorithm with access to the inputs, outputs, weights and architecture of a model is a \textit{white-box} method as it can look inside of the model to determine a best attack. A \textit{black-box} attack, on the other hand, is only able to query the model under scrutiny, or in some extreme cases the algorithm may only have access to input-output pairs.

Algorithms can be further decomposed based on what kind of guarantees they are able to provide about the adversarial example they craft; if an algorithm is able to guarantee that it finds a minimal adversarial example or can guarantee that an adversarial example does not exist if it cannot find one, then we consider it a \textit{verification} algorithm, as opposed to \textit{heuristic} search algorithms which make no guarantees about the quality or existence of any adversarial examples that are crafted. 

In our review of these methods, we seek to only give a brief summary of pertinent works rather than providing an exhaustive treatment of the field. 

\textbf{White-box Heuristic Algorithms}
One of the first explorations of adversarial examples was reported 
in \cite{propertiesOfNeuralNetworks}, which framed the discovery of adversarial examples as a constrained optimization problem in the $l_2$ norm. This was followed by \cite{FGSM}  which improved upon the L-BFGS algorithm proposed in \cite{propertiesOfNeuralNetworks} and expanded the attack to the $l_\infty$ metric. The current state of the art in white-box attack methods, however, is CW-attacks \cite{CW-attacks}, which uses a different optimization problem to generate more refined (i.e. more similar to the original input) adversarial examples.

\textbf{White-box Verification Algorithms}
An early attempt at verifying neural networks uses a simplification of the classifier as a linear system and formulates the verification procedure as the potential solution to a set of linear constraints. More recent work has expanded this approach successfully to rectified linear units by employing an extension of the simplex method to solve the system of equations \cite{reluplex}. Other methods use different iterative refinements in order to find adversarial examples or prove that none exist. An early attempt in this vein (DLV) uses a multi-path search through the connections of the network to exhaustively explore a region around the input through finite discretisation \cite{safetyVerification}. Another white-box verification approach employs global optimisation 
\cite{Ruan2018Reachability}. 

\textbf{Black-box Algorithms}
One of the first black-box methods for discovering adversarial examples involved training a surrogate model and then applying a white-box attack on the surrogate model \cite{BlackBoxAttack}. This approach relies on the transfer of the attack from the surrogate to the real model, but this was shown to be empirically effective. Further, iterative approaches to verification of neural networks have also been done in the black-box setting, such as \cite{FGBBST}, which uses exhaustive input layer explorations to formulate optimal $l_0$ attacks on images. This method was refined and improved in \cite{GameBasedVerification} by exploiting Lipschitz continuity. 

To the best of our knowledge, the work on robustness of 3D deep learning pipelines has focused entirely on randomized occlusion. In \cite{PointNetOrig}, they make a specific claim of robustness that stems from the existence of a critical set. In this work we will generalize the idea of a critical set to volumetric networks and will reverse their claim to show that, while in theory critical sets may offer robustness, in practice they are actually a weak point that can be exploited. Further discussion 
appears in Section \ref{sec:discussion}.

\section{Robustness Analysis} \label{sec:theory}

In this section we formalize the ideas and notation that will allow us to analyze the robustness of 3D deep learning pipelines.

\subsection{Representations for 3D Data}\label{sec:prelims}

We take a neural network $N: \mathbf{X} \mapsto \mathbf{Y}$ to be a function with domain $\mathbf{X} $ and co-domain $\mathbf{Y}$. An input or object is a set of vectors $x = \{x_0, ..., x_n \}$ where $x_i \in \mathbb{R}^3$. For the remainder of this paper, we will assume the domain to be a set of such sets, $x \in \mathbf{X} \subseteq 2^{\mathbb{R}^3_{[0,1]}}$. 
Moreover, we will define $\mathbf{Y}$ to be the set of possible classes for each object. The output of the network with respect to $x$ is given as $N(x) = y$, for some $y \in \mathbf{Y}$. Further, we will represent the network assigned probability (or confidence) that $x$ belongs to a class $y$ as $N_y(x)$. Finally, we use $|\cdot|$ to denote the cardinality (i.e. number of unique elements) of a set (where elements belong to $\mathbb{R}^3$). 

In Section \ref{sec:3DDL}, we referenced the fact that almost all deep 3D classification algorithms can be broken down into two functions, one which translates the input to a latent representation, and another which classifies based on that latent representation.\footnote{Note that this is explicitly done in \cite{PointNetOrig} in order to gain an order-invariant input.} We will take the latent translation of an input $x$ to be represented by $L(x) = l$ where $l$ is the result of a max pooling operation on the $d$-dimensional latent vector (specifics are given below). After this latent translation, we use $y = M(l)$ to represent the output of the max pooled latent vector $l$ from an FCN $M$. In summary, this means we break down the original neural network, $N$, into a composition of functions, $M(L(x))$.

\textbf{PointNets} As discussed in Section \ref{sec:3DDL}, PointNets are designed to work on raw point cloud data. Formally, the input to a PointNet, $x$, is (as in our preliminaries) a set of $n$ points from $\mathbb{R}^3$ normalized to the unit cube. One major challenge with dealing with this kind of data is the fact that there are $n!$ possible orderings of a single input. As such, PointNets must be symmetric functions (i.e. order invariant). This is achieved using the latent translation of the data. In essence, each component of the input, $x_i$ (a point in $\mathbb{R}^3$), is passed through a series of translations to higher dimensions followed by a convolution operation. In the original PointNet paper, \cite{PointNetOrig}, each point $x_i \in \mathbb{R}^3$ was translated into $\mathbb{R}^{64}$ via a CNN (convolutional neural network) and then this 64-dimensional vector was subjected to a 1D convolution operation. A similar procedure was repeated until the network arrived at $n$-many 1024-dimensional latent vectors, at which point all $n$ 1024-dimensional vectors are max pooled into a single representative 1024-dimensional latent vector, $l$. The process we just described will (as previously mentioned) be referred to as $L_{point}: \mathbb{R}^{3\times n}_{[0,1]} \mapsto \mathbb{R}^{1024}$. The original PointNet architecture is presented in Figure \ref{fig:background}.

\textbf{Volumetric Networks} When compared to PointNets, volumetric networks have a much more canonical latent vector translation. The first step in a volumetric network, given that the input is a set of vectors from $\mathbb{R}^3$, is the translation into a voxelized cube (described in Section \ref{sec:3DDL}). In the case of VoxNet \cite{VoxNet2015}, the input cube $x \in \mathbb{R}^{d \times d \times d}_{[0,1]}$ is passed through a three-dimensional convolution operator to produce $m$ different filters for the cube; this is repeated a number of times and then the resulting series of $m$-many $s \times s \times s$ cubes is passed into a fully connected network. As such, we take the flattened version of the $m$-many scaled down cubes to be the output of the latent translation $L_{volum}$. Of course, straightforward convolutional neural networks are not the only kind of volumetric network that have been studied. Many popular forms of 2D CNNs have been scaled directly up to 3D. An example of the decomposition of a volumetric 3D pipeline can also be seen in Figure \ref{fig:background}.

\  \\

\begin{figure}[t]
\begin{center}
 \includegraphics[width=0.85\linewidth]{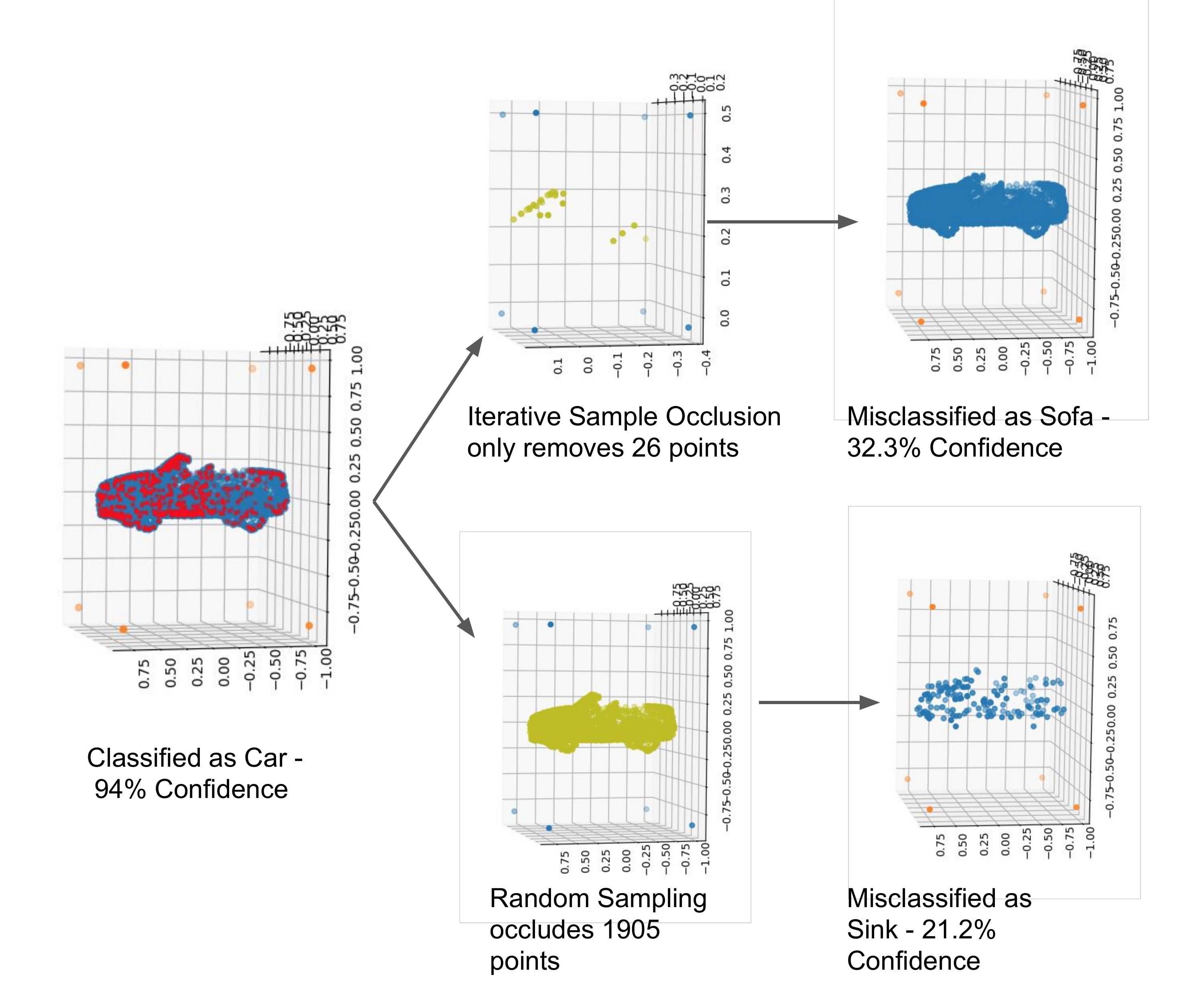}
\end{center}
\vspace*{-0.5cm}
   \caption{A car, though initially classified correctly with high confidence, is easily changed into being classified as a sofa with only 26 points changed. The original input has been marked with red points to denote all the parts of the input that exist in the critical set.}
\label{fig:pointnetcar}
\end{figure}

Given our decomposition of 3D deep learning pipelines into latent translations and fully connected networks (FCNs), it seems straightforward to simply attack the FCN with standard robustness analysis techniques and then project the latent vector back into the ambient space. Unfortunately, almost all current methods of crafting adversarial examples are inappropriate for safety testing of 3D deep learning pipelines. The most popular methods of generating adversarial examples use either the $l_\infty$ or $l_2$ norm. Optimization with respect to these kinds of manipulations encourages a change in all (or almost all) of the components of an input by some small value $\pm \epsilon$. This is inappropriate for the volumetric representation because each component of the input represents the probability that that component is inhabited, and assigning every non-inhabited point in a scene $\pm \epsilon$ probability of having an occupying object is unrealistic. Further, in the case of a point-cloud representation scheme, using a blanket manipulation of every point in the input (in a different direction) has the potential to take the input outside of the natural data manifold, that is to say, that if $\epsilon$ is non-negligible, then we may corrupt the underlying structure of the input that makes it recognizable to humans. Examples of this phenomenon exist in natural images for moderate values of $\epsilon$ \cite{FoolingHumans}, and because point clouds are much more difficult for humans to recognize without manipulation, it seems that a blanket change to all points would, in practice, render many point clouds unrecognizable to humans. 

Taking this as the case, we turn to $l_0$-norm optimized attack algorithms. An $l_0$-norm attack simply optimizes for the number of changes made to the input. We note that the cardinality operator $|\cdot|$ allows a subset of the manipulations allowed by the $l_0$ norm. This prioritizes maintaining the underlying structure of the point cloud while allowing for occlusions, introduction of spurious features, and a handful of shifts in data positions. In the next section we will detail the formalization of the problem of crafting adversarial examples with adversarial occlusion on 3D data.

\begin{figure}[t]
\begin{center}
 \includegraphics[width=0.88\linewidth]{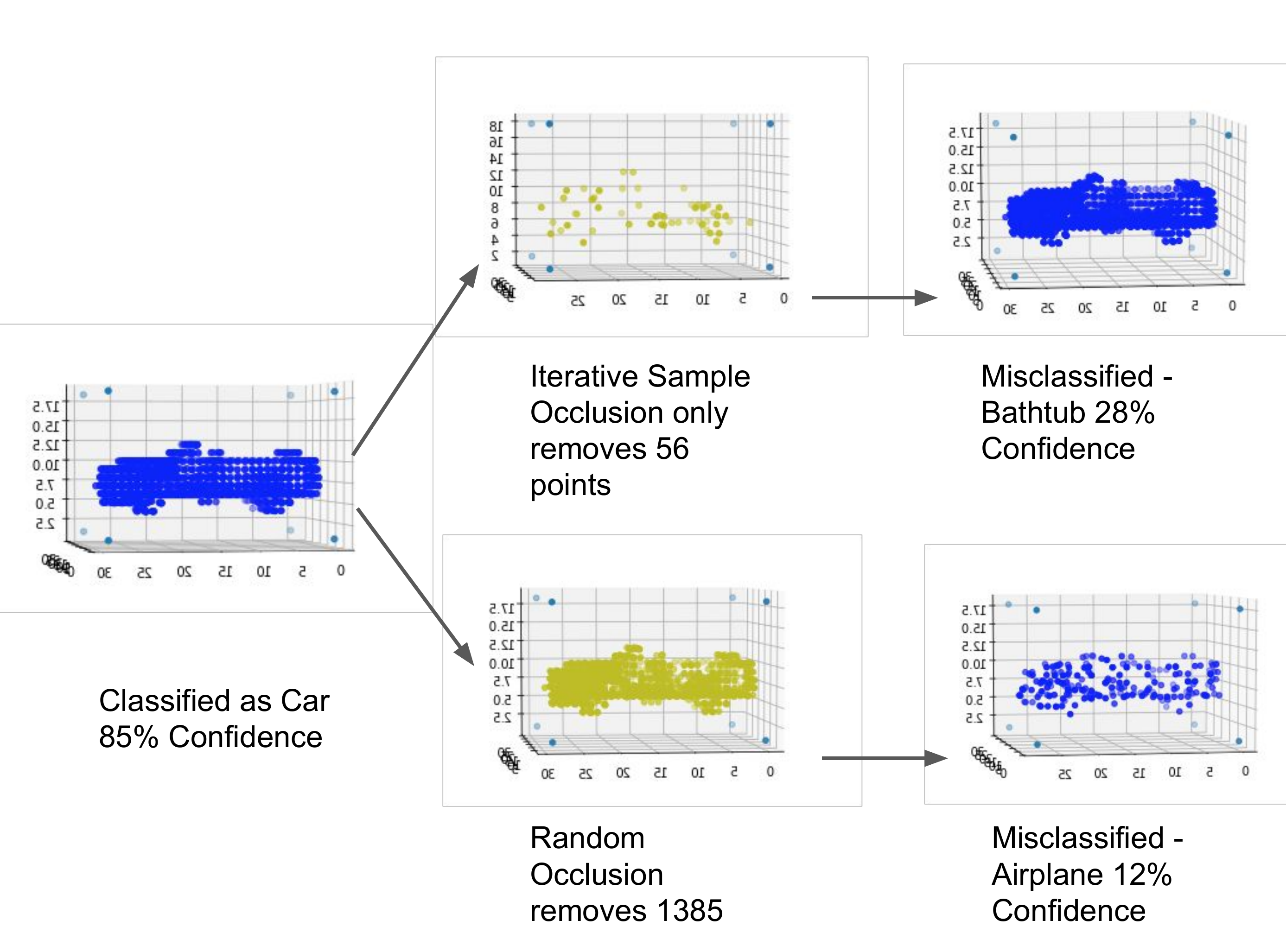}
\end{center}
\vspace*{-0.5cm}
   \caption{We have taken the same model as in Figure \ref{fig:pointnetcar} and generated an adversarial example of the input with respect to the VoxNet architecture. The details of the model salience can be found in Figure \ref{fig:voxsal}.}
\label{fig:voxnetcar}
\end{figure}

\subsection{Occlusion Attacks}
In tasks such as 2D pedestrian recognition and facial recognition, occlusion has been studied under several different models \cite{OcclusionModel1, OcclusionModel2}. Occluded inputs are valuable to consider when evaluating the safety of a decision making network as it is not always the case that pedestrians, vehicles, or objects of interest will be in complete and plain view. These occlusions may not always be natural, however. In fact, it is often the case that data (point clouds) from sensors exhibit stochasticity. As such, it is imperative to model the cases in which some parts of the data might be missing. 

Because the point-cloud and volumetric domains are less rich than the image domain, there is no way to directly utilize pre-developed models of occlusion. In lieu of this, we simply define an occlusion of $x$ to be some $x' \subset x$ and set up an optimization to find the minimum occlusion defined as:

\begin{equation}\label{eq:OcclusionObjective}
    \argmin_{x' \subseteq x} (|x| - |x'|) \quad s.t. \quad  N(x') \neq N(x)
\end{equation}

Of course, where we have an $n$ point input there are $2^n-1$ possible occlusions that could possibly satisfy this objective. In order to cut down this search space we will use the notion of a latent translation to define a critical point set, $C^x_n$ (abbrev. $C_n$), which is a concept introduced in \cite{PointNetOrig}. We again distinguish our use of the critical set by noting that in this work we generalize it to volumetric networks and use it explicitly to display model weakness rather than to hypothesize robustness:

\begin{equation}\label{eq:CriticalPointSet}
    C_n^x = \{ x_i \in x \; \mid \; L(x/x_i) \neq L(x) \}
\end{equation}
\noindent
where we define $x/x_i$ to be the values of $x$ that do not include $x_i$. That is, a value exists in the critical point set if and only if its removal from the input impacts the latent representation of the input. For $L_{point}$ we know that $x_i \in C^x_n$ if $\forall i\neq j, l_i \geq l_j$ by virtue of the max pooling layer. Further, we know that for $L_{volum}$, $x_i \in C^x_n$ if $\forall l_j \in \eta(l_i), \alpha(l_i) \geq \alpha(l_j)$ where $\alpha(l_i)$ represents the activation value of the $x_i^{th}$ input voxel after convolution\footnote{Note that multiple $x_i$'s will map to the same $\alpha(x_i)$ as a result of the down-sampling associated with the convolution operator.} and $\eta(x_i)$ represents the pooling neighborhood of the last layer of the latent translation prior to the flattening and fully-connected network. 

Both of these formulations require us to use the vector $l$ which implies that the method needs white-box access to the network in order to find adversarial examples; however, we will later prove that, by exploiting our knowledge of 3D deep learning approaches, our attack algorithm can indeed work in the black-box setting.

\textbf{Iterative Salience Occlusion} Given the above ability to calculate the critical point set, we propose the following simple algorithm to randomly explore and iteratively refine an occlusion attack.

\begin{algorithm}
\caption{Iterative Salience Occlusion}\label{euclid}
\begin{algorithmic}[1]
\Procedure{ISO}{N, y, x, $g$}
\State $x' \gets x$
\While{$g(N, y, x, x')$}
\State $C_n \gets $Calc$C_n(N, x')$
\State $C_n \gets $Rank$(C_n)$
\For {$x_i \in C_n$}
\State \textbf{if} ($N(x') \neq y$)
\State \quad break
\State \textbf{if}($N_y(x'/x_i) \leq N_y(x')$)
\State \quad $x' \gets x'/x_i$
\EndFor
\For{$x_i \in x - x'$}
\State \textbf{if}($N(x' \cup x_i) \neq y$)
\State \quad $x' \gets x' \cup x_i$
\EndFor
\State \textbf{if}($N(x') \neq y$ and $g(N, y, x, x')$ $\neq$ true)
\State \quad $x' \gets x$ 
\EndWhile
\State return $x'$
\EndProcedure
\end{algorithmic}
\end{algorithm}

Within the ISO 
algorithm we take the Rank function to be some way of computing and ordering the critical point set based on the saliency and assume that Rank never returns the same permutation twice until it has exhausted all other options. All operations within the algorithm use standard set-theoretic notation.

Our 
algorithm has several properties that make it ideal as a method for evaluating worst-case occlusion. Firstly, the algorithm is \textit{anytime}, meaning that the algorithm can terminate given any user defined termination condition, which we encode as a boolean function $g$, an input to the algorithm. The function $g$ can encode adversarial goals including confidence reduction ($N_y(x) - N_y(x') > k$), crafting adversarial examples ($N(x) \neq N(x')$), or crafting targeted adversarial examples ($N(x') = y'$). One may also change the algorithm slightly so that it returns the best adversarial example it has found in a specified amount of time. 

In addition to being \textit{anytime}, we can show that, in the case of PointNet architectures, this algorithm (which is currently \textit{white-box}) can operate in the \textit{black-box} setting. 

\begin{theorem}
Given an input $x$ and a PointNet network $N$, we can compute the critical set $C_n$ in a black-box setting, given that all weights in $M$ are non-zero.
\end{theorem}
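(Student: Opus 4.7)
My strategy is to exhibit an explicit black-box procedure that identifies $C_n^x$ using only queries to $N$, where by ``query'' I allow the adversary to observe the full confidence vector $(N_y(x))_{y \in \mathbf{Y}}$ rather than just $\argmax_y N_y(x)$. The procedure is: for each $x_i \in x$, query $N(x/x_i)$, and declare $x_i \in C_n^x$ iff the vector $N(x/x_i)$ differs from $N(x)$ in at least one coordinate. The entire task then reduces to showing, for every $x_i$, the equivalence $L(x/x_i) = L(x) \iff N(x/x_i) = N(x)$.

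\textbf{Soundness (easy direction).} If $L(x/x_i) = L(x)$, then $N(x/x_i) = M(L(x/x_i)) = M(L(x)) = N(x)$, simply because $N$ is the deterministic composition $M \circ L$. Hence the procedure never falsely flags a non-critical point.

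\textbf{Completeness (the main direction).} Suppose $x_i \in C_n^x$, i.e.\ $L(x/x_i) \neq L(x)$. First I would exploit the max-pool structure of $L_{point}$: since each coordinate of $L(x)$ is a pointwise maximum of per-point features, removing a single point can only decrease coordinates, so the difference $\Delta := L(x) - L(x/x_i)$ is componentwise non-negative and, by hypothesis, non-zero. I then propagate this change through $M$: the first affine layer produces the pre-activation difference $W_1 \Delta$, and because every entry of $W_1$ is non-zero and $\Delta$ is non-negative with at least one strictly positive coordinate, each coordinate of $W_1 \Delta$ is a sum of strictly signed nonzero contributions from the support of $\Delta$. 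I would then induct through the remaining layers of $M$, observing that (a) the ReLU of a layer can only suppress the perturbation on neurons whose pre-activations stay non-positive both before and after, and (b) the subsequent weight matrix, having no zero entries, mixes any surviving perturbation into every downstream neuron. Finally, since the network terminates in a softmax (which is coordinatewise strictly monotone in its logits), any surviving change in the final pre-activation induces a change in the output probability vector, giving $N(x/x_i) \neq N(x)$.

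\textbf{The main obstacle.} The delicate step is ruling out exact cancellation in $W_1 \Delta$ and in subsequent layers: even with every weight non-zero, the specific non-negative $\Delta$ supplied by the max-pool could in principle line up with signs of rows of $W_1$ to zero out some pre-activations. I would handle this either (i) by interpreting ``all weights non-zero'' generically, noting that the set of weight configurations producing such exact cancellations is Lebesgue-null in weight space, so the equivalence holds almost surely and in particular for any trained network up to perturbations of measure zero, or (ii) by tracking the entire vector of pre-activations simultaneously and showing that the rank and sign structure of $W_1$ restricted to the support of $\Delta$ forces at least one coordinate to be non-zero, which then propagates as above. Combining soundness and completeness yields that the black-box procedure correctly recovers $C_n^x$, proving the theorem.
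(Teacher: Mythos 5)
Your procedure is the same one the paper uses: remove each point in turn, query the network, and declare the point critical iff the output changes. Your soundness direction is fine and matches the paper. The difference is in the completeness direction, where the paper's proof is a one-line assertion (``the output will change due to the elimination of its contribution to $l$'') and you instead try to actually propagate the perturbation $\Delta = L(x) - L(x/x_i)$ through $M$. In doing so you have correctly located both the step that carries all the weight of the non-zero-weights hypothesis and the step where the argument is genuinely incomplete: neither your write-up nor the paper's rules out exact cancellation. Be aware that your proposed repair (ii) cannot work as stated: ``all weights in $M$ non-zero'' does not prevent $W_1\Delta = 0$ for a non-negative, non-zero $\Delta$ (take a row of $W_1$ equal to $(1,-1)$ and $\Delta = (1,1)^{\top}$), nor does it prevent a ReLU whose pre-activation is negative both before and after the perturbation from absorbing the change entirely. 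Your repair (i) is the honest one, but it proves a different statement --- that the leave-one-out test is correct for Lebesgue-almost-every weight configuration --- rather than the theorem as stated for the given trained $M$. So your attempt is the same approach as the paper's, carried further and more candidly; the residual gap you flag is real, is present in the paper's own proof as well, and is not closable from the stated hypothesis alone.

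One further point: you assume the black-box adversary observes the full confidence vector $(N_y(x))_{y}$. That assumption is consistent with what the ISO algorithm actually queries (it compares $N_y(x'/x_i)$ with $N_y(x')$), and the paper's phrase ``checking the final output of the network'' is best read the same way; under the stricter reading where only the predicted label is visible, completeness fails immediately, since a perturbation of the logits need not flip the argmax. You were right to make the assumption explicit.
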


\begin{proof}
For each $x_i \in x$ we can determine if it exists in the critical set by removing $x_i$ from $x$ and checking the final output of the network. If $x_i \in C_n$, then, by definition, the output will change due to the elimination of its contribution to $l$. If $x_i \not\in C_n$ then the output will not change because $l$ will not change.
\end{proof}


Finally, this algorithm has the strength of being a \textit{verification} approach, meaning that it provides guarantees about both (1) finding an adversarial example if one exists and (2) finding an adversarial example that satisfies Equation \ref{eq:OcclusionObjective}  if one exists. Below we show that, if we set $g$ such that it returns true if and only if all possible $C_s$ permutations have been checked, then Equation \ref{eq:OcclusionObjective} must be satisfied. 

\begin{theorem}\label{thm:guarantee}
Given an input $x \in 2^{\mathbb{R}^3_{[0,1]}}$ and a neural network $N$ that satisfies our framework, we can show that the ISO algorithm will find the optimal adversarial example that satisfies Eq. \ref{eq:OcclusionObjective}.
\end{theorem}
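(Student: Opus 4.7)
The plan is to combine three ingredients: (i) the defining property of the critical point set, (ii) the exhaustiveness of Rank over permutations of $C_n$, and (iii) the add-back refinement at lines 11--14 of ISO. First I would establish the key observation that if $x_i \notin C_n^{x'}$, then $L(x' \setminus \{x_i\}) = L(x')$ and hence $N(x' \setminus \{x_i\}) = M(L(x' \setminus \{x_i\})) = N(x')$; iterating this, any adversarial occlusion $x^* \subsetneq x$ with $N(x^*) \neq N(x)$ corresponds to a descending chain $x = x^{(0)} \supsetneq \cdots \supsetneq x^{(k)} = x^*$ in which each removed point belongs to the then-current critical set $C_n^{x^{(j)}}$. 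Hence the search for the optimal $x^*$ in Eq. \ref{eq:OcclusionObjective} can be restricted, without loss of generality, to chains of critical-set removals.

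Second, by hypothesis $g$ is chosen so that the outer while loop terminates only once every permutation of every encountered critical set has been produced by Rank. The inner for-loop at lines 6--10 walks through the critical points in the current ordering, removing each one whose removal does not raise the true-class confidence, and breaks as soon as misclassification is attained. For any putative optimum $x^*$, I would argue that at least one ordering of $C_n^x$ leads the inner loop to arrive at some $x'' \subseteq x^*$ that is misclassified: specifically, one that begins by processing the points of $x \setminus x^*$ in an order along which the monotonicity condition is successively satisfied. Existence of such an order follows by an inductive argument along the chain from $x$ to $x^*$, using the fact that $x^*$ is itself misclassified and so at every intermediate node at least one continuation of the chain admits a non-increasing confidence step.

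Third, the refinement step at lines 11--14 reinstates every already-removed point whose reintroduction does not restore the original class label, thereby guaranteeing that no superfluous point is kept outside $x'$. Combined with the exhaustive enumeration of permutations in the outer loop, this ensures that among the misclassified subsets that ISO encounters, at least one attains the minimum of $|x|-|x'|$ required by Eq. \ref{eq:OcclusionObjective}, and the algorithm returns the best it has found.

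The principal obstacle is the second step: showing that the greedy, confidence-monotone inner loop actually has access to an optimum rather than being blocked by removals that temporarily raise the true-class confidence. My strategy for closing this gap is to exploit Rank's exhaustiveness: even if no single ordering traces the direct chain to $x^*$ greedily, some permutation will drive the inner loop to a misclassified subset $x''$ not necessarily equal to $x^*$, after which the add-back refinement must prune $x''$ down to a configuration of cardinality no larger than $|x^*|$, since any point of $x''$ whose reintroduction preserves misclassification is added back. Thus optimality is preserved in the worst case.
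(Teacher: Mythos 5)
Your overall route is the same as the paper's: the paper's proof also makes exactly two moves, namely (i) Rank exhausts all manipulation orders of the critical set so an adversarial example is found if one exists, and (ii) the add-back loop at lines 11--14 makes each found example minimal for its order. Your preliminary lemma --- that $x_i \notin C_n^{x'}$ implies $N(x'\setminus\{x_i\}) = N(x')$, so the search may be restricted to chains of critical-set removals --- is a genuine addition that the paper leaves implicit, and the obstacle you flag in your second step (the inner loop only removes $x_i$ when $N_y(x'/x_i) \leq N_y(x')$, so a greedy confidence filter may block the path to the optimum) is real and is not addressed by the paper's proof at all.

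However, your patch for that obstacle does not close the gap. The add-back loop is a single greedy pass over $x - x'$ that reinstates $x_i$ whenever $N(x' \cup x_i) \neq y$; at best it yields a misclassified subset $x''$ to which no \emph{single} point can be returned without restoring the label (and not even that, since points considered early in the pass may become re-addable after later additions). Such a $x''$ is only locally maximal among misclassified subsets, whereas Eq.~\ref{eq:OcclusionObjective} asks for the globally maximal one $x^*$; a $1$-maximal misclassified subset can have strictly smaller cardinality than $x^*$, so ``the add-back must prune the occlusion down to no more than $|x|-|x^*|$'' is exactly the claim that needs proof and is false for greedy single-pass refinement. To make the argument go through you would need either to exhibit, for the optimal $x^*$, one ordering of $x \setminus x^*$ along which every removal is confidence-non-increasing (your inductive sketch asserts this but the induction step --- that some continuation of the chain always admits a non-increasing step --- does not follow from $x^*$ being misclassified), or to drop the confidence filter and the single-pass restriction from the algorithm. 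Note also that Rank enumerates permutations of $C_n^x$ computed on the current $x'$, not arbitrary subsets of $x$, so exhaustiveness over permutations does not by itself give exhaustiveness over candidate occlusions once the critical set evolves during removal. The paper's own proof silently assumes all of this away, so your attempt is, if anything, a more honest account of the same argument --- but as written neither establishes the stated optimality guarantee.
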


\begin{proof}
First, if there exists $s \subseteq x$ that is an adversarial example, we must find it by exhaustive search. This is because the Rank function will allow us to check all possible manipulation orders of the critical set and we set the $g$ function such that the algorithm will not terminate until this is the case. Next, we know that for each manipulation order that is checked we must yield the smallest possible subset for that order because of the iterative refinement on lines 11-13 of the algorithm (any points unnecessarily removed will be added back in). This means that if an adversarial example exists we will find it, and any example we find must be minimal.
\end{proof}

\section{Evaluation} \label{sec:emprical}

Given the strengths of the ISO algorithm, we will use it to show that, in almost all cases, random occlusion vastly overestimates the robustness of 3D deep learning pipelines. This discovery exacerbates the need for further study in the development of robust 3D deep learning algorithms and methods to check their safety.

In order to study the effectiveness of the proposed algorithm on both point-cloud and volumetric representations, we retrained both the VoxNet \cite{VoxNet2015} and PointNet \cite{PointNetOrig} network architectures on the ModelNet10 and ModelNet40 benchmarks \cite{ModelNet} as well as on 3D objects extracted from the LIDAR sensor of the KITTI self-driving car \cite{Geiger2012KITTI} (see Appendix). In all cases, the networks were trained for 50 epochs according to the training details provided in the respective papers. Using the pre-defined test-train split from ModelNet, all trained networks achieved accuracy within 4\% of the reported accuracy on the test set. 

After training each network, we sampled 200 objects from the test set in order to evaluate the robustness of each model. In the case of ModelNet10, the networks were tested with a time cutoff of 2 seconds to find an adversarial example, and in the case of ModelNet40 the ISO algorithm was given a 5 second cutoff. On the other hand, the random occlusion algorithm was simply given a random permutation of the data and removed the data in that random order until a misclassification was found, at which point it would report the number of points it needed to remove. As we can see in Figures
\ref{fig:pointnet-robustnesscurve}.a and \ref{fig:voxnet-robustnesscurve}.a, the results reported by the random occlusion match up very well with those that are reported in \cite{PointNetOrig}. Over the set of 200 objects we track how the accuracy of the network changes as we occlude more points from each model. In the worst case, VoxNet trained on ModelNet40, it took only the occlusion of 6.5\% of the input in order to reduce the network to 0\% classification accuracy. 

\begin{table*}[t]\small
\centering
\begin{tabular}{ |c|c ||c|| c|c|c|c|c||} 
\hline
Architecture & Dataset & Method & 0\% Occl.  & 25\% Occl. & 50\% Occl. & 75\% Occl. & 95\% Occl. \\
\hline
\hline
\multirow{4}{4em}{VoxNet} 
                            & \multirow{2}{6em}{ModelNet10} 
                                                            & Rand. & 79.8\% & 72.1\% & 66.9\% & 51.9\% & 10.9\% \\ 
                                                            \cline{3-8}
                                                            & & ISO & 79.8\% & 1.0\% & 0\% & 0\% & 0\% \\ 
                                                            \cline{2-8}\cline{2-8}
                            & \multirow{2}{6em}{ModelNet40}
                                                            & Rand. & 76.1\% & 60.9\% & 39.1\% & 12.3\% & 2.0\%  \\
                                                            \cline{3-8}
                                                            & & ISO & 76.1\% & 0\% & 0\% & 0\% & 0\% \\ 
                                                             \cline{2-8}\cline{2-8}
                            & \multirow{2}{6em}{KITTI}
                                                            & Rand. & 71.5\% & 55.5\% & 30.0\% & 13.5\% & 6.5\%  \\
                                                            \cline{3-8}
                                                            & & ISO & 71.5\% & 0\% & 0\% & 0\% & 0\% \\ 
                                                            \cline{3-8}
\hline
\multirow{4}{4em}{PointNet} 
                            & \multirow{2}{6em}{ModelNet10}
                                                            & Rand. & 86.1\% & 84.0\% & 83.0\% & 79.2\% & 60.2\%  \\
                                                            \cline{3-8}
                                                            & & ISO & 86.1\% & 27\% & 5\% & 0\% & 0\% \\ 
                                                            \cline{2-8}\cline{2-8}
                            & \multirow{2}{6em}{ModelNet40} 
                                                            & Rand. & 82.5\% & 79.4\% & 78.2\% & 74.3\% & 35.9\% \\
                                                            \cline{3-8}
                                                            & & ISO & 82.5\% & 17.9\% & 4.1\% & 0\% & 0\% \\ 
                                                            \cline{2-8}\cline{2-8}
                            & \multirow{2}{6em}{KITTI} 
                                                            & Rand. & 73.0\% & 72.5\% & 72.0\% & 68.5\% & 40.5\% \\
                                                            \cline{3-8}
                                                            & & ISO & 73.0\% & 49.5\% & 37.5\% & 0\% & 0\% \\ 
\hline
\end{tabular}
\caption{Reduction in classification accuracy for different levels of both random and iterative saliency occlusion for all tested datasets (ModelNet10, ModelNet40 and KITTI).}
\label{table:comparison}
\end{table*}

A more readily interpretable version of Figures
\ref{fig:pointnet-robustnesscurve} and \ref{fig:voxnet-robustnesscurve} exists in Table \ref{table:comparison}. We see that, in general, using random occlusion instead of the ISO algorithm overestimates robustness of the network to occlusion by around 60\%. Further, we see that each of the models was reduced to less that 10\% classification accuracy providing that the ISO algorithm was given time to manipulate half of the data.

\section{Discussion}\label{sec:discussion}

From our theoretical analysis it seems that one of the key components in accurately analyzing a 3D deep learning architecture is the cardinality of the critical point set. On one hand, it may seem desirable to have a network with a low critical set cardinality, that is to say, a network that only looks at several key points of the input in order to make a decision. It is clear that there exists a relationship between the number of points in the critical set and the effectiveness of a random occlusion approach. For example, the probability that a random network will select a point that matters to the classification of any given input point cloud is precisely equal to the cardinality of the critical set divided by the cardinality of the unique points in the input. On the other hand, it seems that a low critical point set cardinality will lead directly to the success of the proposed ISO algorithm as it only manipulates points that exist in the critical set.

\textbf{PointNet}
  In Figure \ref{fig:criticalpoint-cards} we see that the the cardinality of the critical set on ModelNet inputs with 2048 points hovers around 450 to 500 points, roughly a quarter of the data. 

\begin{figure}[t]
\begin{center}
 \includegraphics[width=0.85\linewidth]{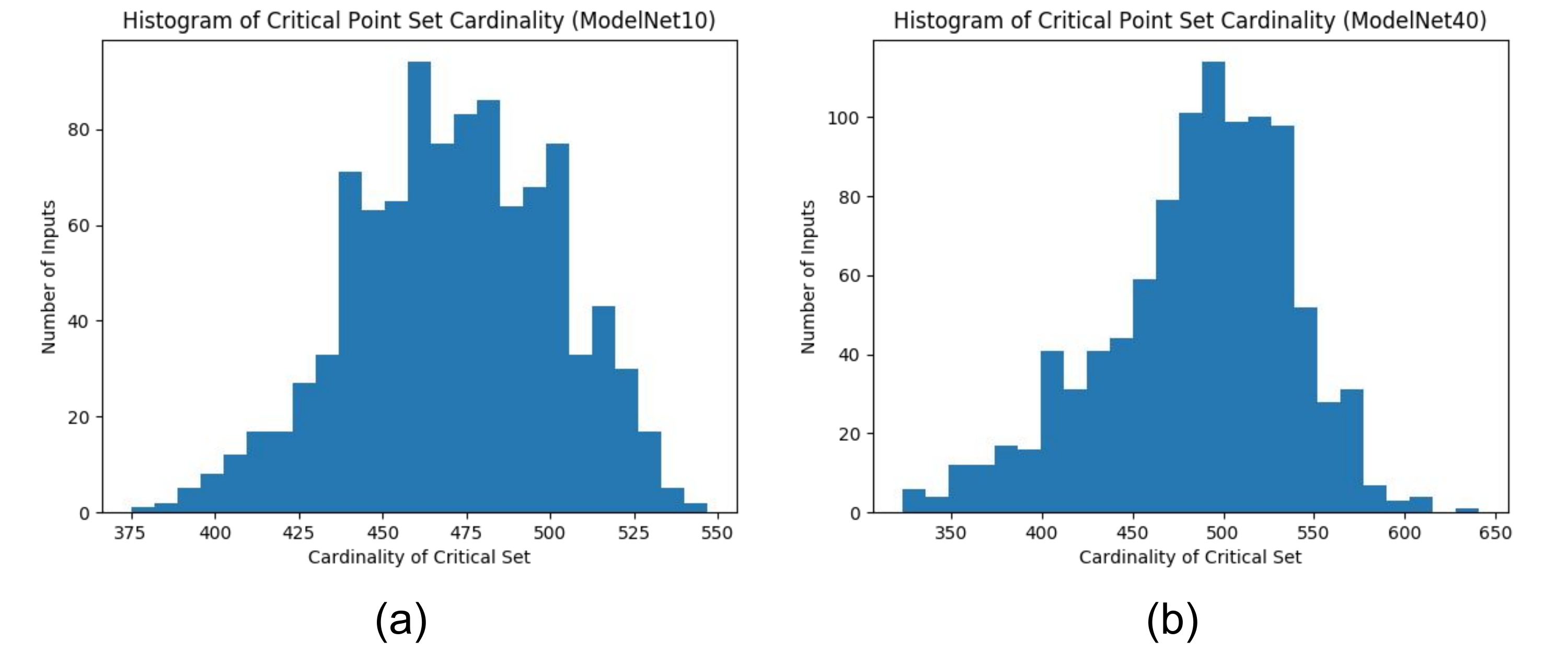}
\end{center}
\vspace*{-0.5cm}
   \caption{Distributions of critical set cardinalites for PointNet trained on ModelNet10 and ModelNet40. These distributions were obtained by the calculation of critical point set cardinalites on 1000 test set models. }
\label{fig:criticalpoint-cards}
\end{figure}

Given that the average critical point set cardinality is about 25\% of the input model, we would like to highlight the performance of the ISO algorithm after removal of 25\% of the data. The reason that the network does not immediately collapse to 0\% classification accuracy is that, after a point is removed from the critical point set, there is an opening for a new point to become critical. Despite this complication, our simple algorithm allows us to get an exponential decrease in accuracy with an increase in occlusion, as seen in plots in Figures
\ref{fig:pointnet-robustnesscurve} and \ref{fig:voxnet-robustnesscurve}. Ultimately, this increases our confidence in the link between the modifications of the critical point set and the network's overall robustness.

\begin{figure}[t]
\begin{center}
 \includegraphics[width=1.0\linewidth]{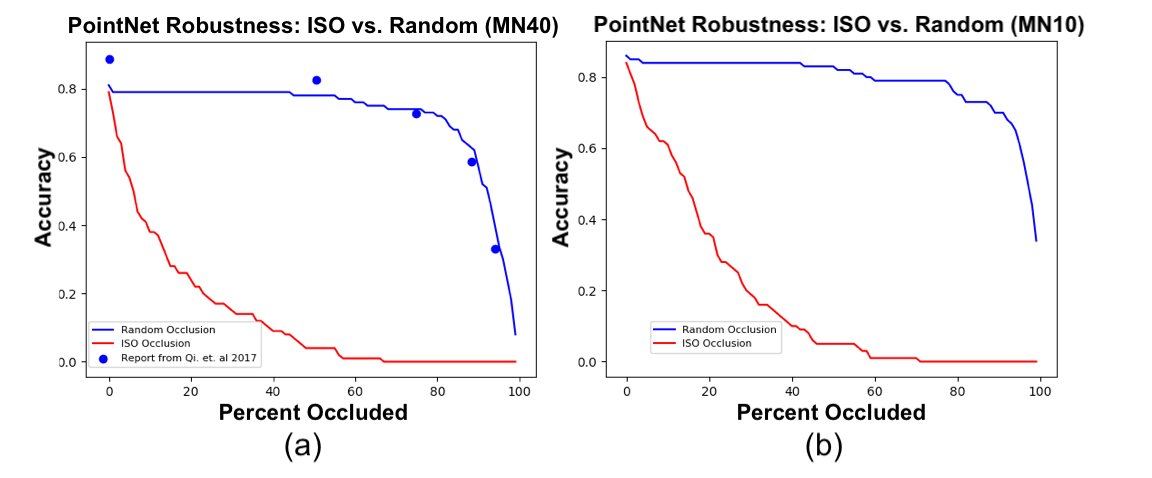}
\end{center}
\vspace*{-0.5cm}
   \caption{PointNet robustness on ModelNet10 (b) and ModelNet40 (a). Blue plots the change in accuracy due to random occlusion whilst red is the change in accuracy due to ISO occlusion. In (a), we have also pulled values from the identical random occlusion test performed in \cite{PointNetOrig}.}
\label{fig:pointnet-robustnesscurve}
\end{figure}

\textbf{VoxNet} While defining the critical set for PointNet is straightforward, VoxNet (and volumetric approaches in general) pose a more difficult problem. Unlike PointNet, where membership in the critical set is binary, convolutional volumetric networks yield continuous salience values. This means that every point in the input exists in the critical set. To rectify this, we set a threshold that determines which points are critical or not. In order to be consistent with the analysis for PointNet, we have set the threshold for membership of the critical set to be the most salient 25\% of the input. This is visualized in Figure \ref{fig:voxsal}; however, when we look at the 25\% occluded point in Table \ref{table:comparison}, we see that we can force almost all examples to be misclassified.

\begin{figure}[t]
\begin{center}
 \includegraphics[width=1.0\linewidth]{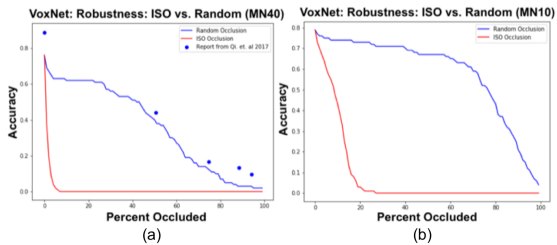}
\end{center}
\vspace*{-0.5cm}
   \caption{Figure follows the same format as Figure \ref{fig:pointnet-robustnesscurve}. VoxNet robustness on ModelNet10 (b) and ModelNet40 (a). On the ModelNet40 benchmark, it takes roughly 6.5\% occlusion to reduce the network performance to 0\%.}
\label{fig:voxnet-robustnesscurve}
\end{figure}

\begin{figure}[t]
\begin{center}
 \includegraphics[width=0.80\linewidth]{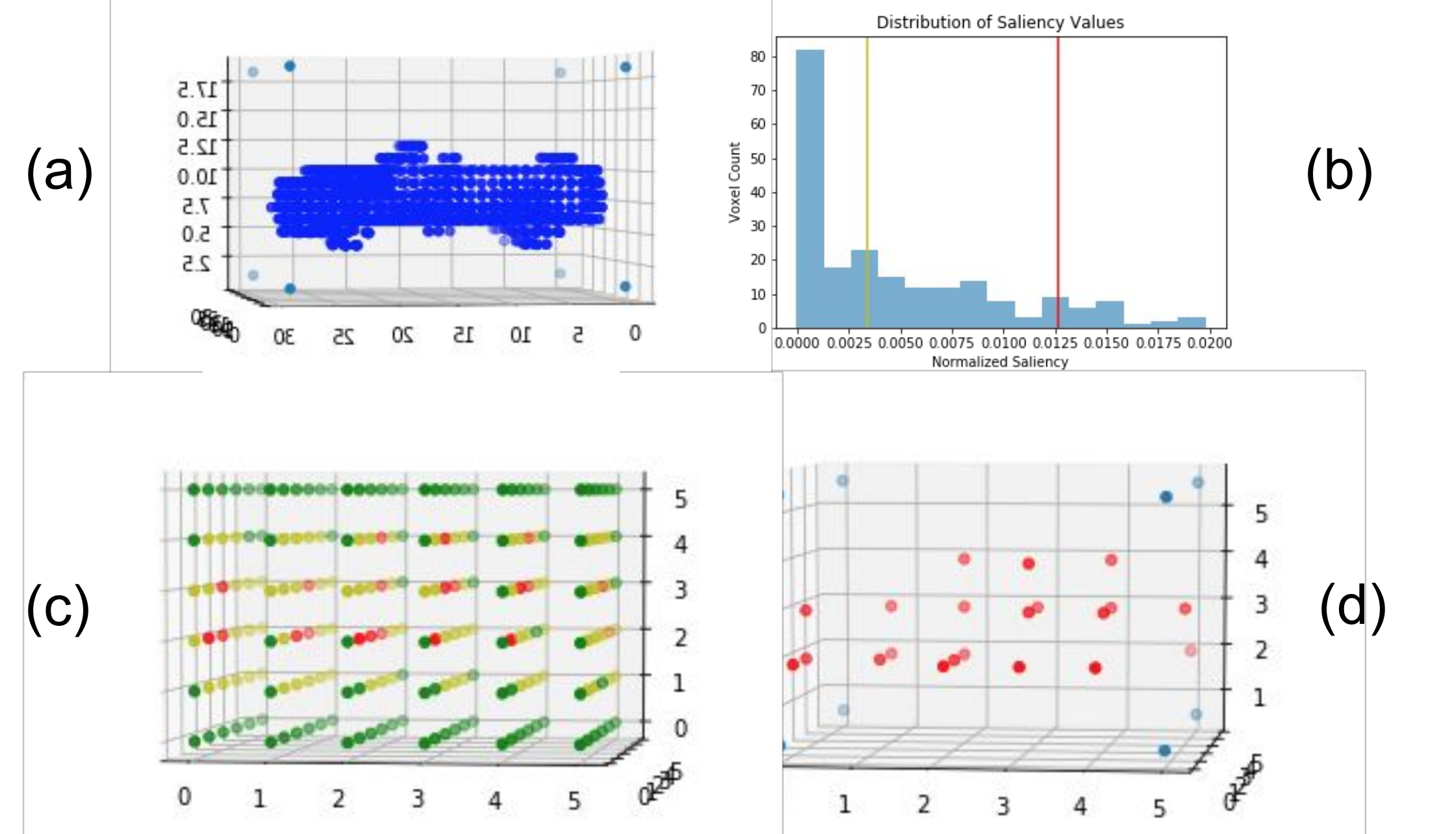}
\end{center}
\vspace*{-0.5cm}
   \caption{For voxelized inputs, (a), we calculate the saliency as described in Section \ref{sec:prelims}. We normalize the latent saliency of the points into a salience distribution. The different quartiles (q1 - yellow line, q3 - red line in (b)) give us an idea of relative saliency; in (c) we map each point in the input to its saliency where green represents the least salient and red the most salient. The critical set is then considered to be any points that fall in the upper 25\% of all computed saliency values, this has been visualized in (d). }
\label{fig:voxsal}
\end{figure}

\textbf{Future Directions} One aim of the approach presented in this paper is to introduce and encourage the use of alternate metrics for accuracy when evaluating 3D deep learning pipelines that may come into use in real-time, safety-critical scenarios. The methods of computing saliency that are formulated in this work can be directly utilized in the frameworks of \cite{FGBBST, GameBasedVerification, Cardelli2019Bounds} in order to derive bounds on the safety of classification. The improvement of generalization of 3D deep learning would 
be aided greatly 
by formulating more robust pipelines for processing point-cloud data.
\vspace{-0.25cm}
\section{Conclusion}

In this work, we demonstrate that the critical point sets induced by the latent space translation in 3D deep learning pipelines, for both point-cloud and volumetric representations, expose a vulnerability to adversarial occlusion attacks that to this point had not been studied. We show that, in the worst case, a black-box verification approach can, using only 4 seconds per input example, reduce the accuracy of a network to 0\% despite manipulating at most 6.5\% of the input. 

\section*{Acknowledgements}\small
This work has been partially supported by 
the EPSRC Programme Grant on Mobile Autonomy
(EP/M019918/1).

\newpage
{\small
\bibliographystyle{ieee}
\bibliography{egbib}

\begin{thebibliography}{10}\itemsep=-1pt

\bibitem{Cardelli2019Bounds}
L.~{Cardelli}, M.~{Kwiatkowska}, L.~{Laurenti}, N.~{Paoletti}, A.~{Patane}, and
  M.~{Wicker}.
\newblock {Statistical Guarantees for the Robustness of Bayesian Neural
  Networks}.
\newblock {\em arXiv e-prints}, Mar. 2019.

\bibitem{CW-attacks}
N.~{Carlini} and D.~{Wagner}.
\newblock Towards evaluating the robustness of neural networks.
\newblock In {\em 2017 IEEE Symposium on Security and Privacy (SP)}, pages
  39--57, May 2017.

\bibitem{multiview-driving}
X.~Chen, H.~Ma, J.~Wan, B.~Li, and T.~Xia.
\newblock Multi-view 3d object detection network for autonomous driving.
\newblock pages 6526--6534, 07 2017.

\bibitem{ALargeDatasetOfObjectScans}
S.~Choi, Q.~Zhou, S.~Miller, and V.~Koltun.
\newblock A large dataset of object scans.
\newblock {\em CoRR}, abs/1602.02481, 2016.

\bibitem{SydneyDataset}
M.~De~Deuge, A.~Quadros, C.~Hung, and B.~Douillard.
\newblock Unsupervised feature learning for classification of outdoor 3d scans.
\newblock {\em Australasian Conference on Robotics and Automation, ACRA}, 01
  2013.

\bibitem{FoolingHumans}
G.~F. Elsayed, S.~Shankar, B.~Cheung, N.~Papernot, A.~Kurakin, I.~J.
  Goodfellow, and J.~Sohl{-}Dickstein.
\newblock Adversarial examples that fool both human and computer vision.
\newblock {\em CoRR}, abs/1802.08195, 2018.

\bibitem{Geiger2012KITTI}
A.~Geiger, P.~Lenz, and R.~Urtasun.
\newblock Are we ready for autonomous driving? the kitti vision benchmark
  suite.
\newblock In {\em CVPR 2012}, 2012.

\bibitem{FGSM}
I.~Goodfellow, J.~Shlens, and C.~Szegedy.
\newblock Explaining and harnessing adversarial examples.
\newblock In {\em International Conference on Learning Representations}, 2015.

\bibitem{NYT-Uber}
T.~Griggs and D.~Wakabayashi.
\newblock How a self-driving uber killed a pedestrian in arizona, Mar 2018.

\bibitem{FusionNet}
V.~Hegde and R.~Zadeh.
\newblock Fusionnet: 3d object classification using multiple data
  representations.
\newblock {\em CoRR}, abs/1607.05695, 2016.

\bibitem{safetyVerification}
X.~Huang, M.~Kwiatkowska, S.~Wang, and M.~Wu.
\newblock Safety verification of deep neural networks.
\newblock In {\em CAV}, pages 3--29, Cham, 2017. Springer International
  Publishing.

\bibitem{reluplex}
G.~Katz, C.~W. Barrett, D.~L. Dill, K.~Julian, and M.~J. Kochenderfer.
\newblock Reluplex: An efficient {SMT} solver for verifying deep neural
  networks.
\newblock In {\em CAV}. Springer International Publishing, 2017.

\bibitem{MadryPGD}
A.~Madry, A.~Makelov, L.~Schmidt, D.~Tsipras, and A.~Vladu.
\newblock Towards deep learning models resistant to adversarial attacks.
\newblock In {\em International Conference on Learning Representations}, 2018.

\bibitem{OcclusionModel1}
M.~Mathias, R.~Benenson, R.~Timofte, and L.~V. Gool.
\newblock Handling occlusions with franken-classifiers.
\newblock In {\em 2013 IEEE International Conference on Computer Vision}, pages
  1505--1512, Dec 2013.

\bibitem{dronelanding}
D.~Maturana and S.~Scherer.
\newblock 3d convolutional neural networks for landing zone detection from
  lidar.
\newblock In {\em 2015 IEEE International Conference on Robotics and Automation
  (ICRA)}, pages 3471--3478, May 2015.

\bibitem{VoxNet2015}
D.~Maturana and S.~Scherer.
\newblock {VoxNet: A 3D Convolutional Neural Network for Real-Time Object
  Recognition}.
\newblock In {\em {IROS}}, 2015.

\bibitem{OcclusionModel2}
J.~Noh, S.~Lee, B.~Kim, and G.~Kim.
\newblock Improving occlusion and hard negative handling for single-stage
  pedestrian detectors.
\newblock In {\em CVPR 2018}, June 2018.

\bibitem{BlackBoxAttack}
N.~Papernot, P.~McDaniel, I.~Goodfellow, S.~Jha, Z.~B. Celik, and A.~Swami.
\newblock Practical black-box attacks against machine learning.
\newblock In {\em Proceedings of the 2017 ACM on Asia Conference on Computer
  and Communications Security}, ASIA CCS '17, pages 506--519. ACM, 2017.

\bibitem{JSMA}
N.~Papernot, P.~D. McDaniel, S.~Jha, M.~Fredrikson, Z.~B. Celik, and A.~Swami.
\newblock The limitations of deep learning in adversarial settings.
\newblock {\em 2016 IEEE European Symposium on Security and Privacy
  (EuroS\&P)}, pages 372--387, 2016.

\bibitem{FrustrumPointNets}
C.~R. Qi, W.~Liu, C.~Wu, H.~Su, and L.~J. Guibas.
\newblock Frustum pointnets for 3d object detection from {RGB-D} data.
\newblock In {\em {CVPR} 2018.}, pages 918--927, 2018.

\bibitem{PointNetOrig}
C.~R. Qi, H.~Su, K.~Mo, and L.~J. Guibas.
\newblock Pointnet: Deep learning on point sets for 3d classification and
  segmentation.
\newblock In {\em 2017 {IEEE} Conference on Computer Vision and Pattern
  Recognition, {CVPR} 2017}, pages 77--85, 2017.

\bibitem{Qi2017PointNetPlus}
C.~R. Qi, L.~Yi, H.~Su, and L.~J. Guibas.
\newblock Pointnet++: Deep hierarchical feature learning on point sets in a
  metric space.
\newblock In {\em Advances in Neural Information Processing Systems 30: Annual
  Conference on Neural Information Processing Systems 2017.}, pages 5105--5114,
  2017.

\bibitem{Ruan2018Reachability}
W.~Ruan, X.~Huang, and M.~Kwiatkowska.
\newblock Reachability analysis of deep neural networks with provable
  guarantees.
\newblock In {\em {IJCAI} 2018.}, pages 2651--2659, 2018.

\bibitem{MultiViewCNN2015}
H.~{Su}, S.~{Maji}, E.~{Kalogerakis}, and E.~{Learned-Miller}.
\newblock Multi-view convolutional neural networks for 3d shape recognition.
\newblock In {\em 2015 IEEE International Conference on Computer Vision
  (ICCV)}, pages 945--953, Dec 2015.

\bibitem{propertiesOfNeuralNetworks}
C.~Szegedy, W.~Zaremba, I.~Sutskever, J.~Bruna, D.~Erhan, I.~J. Goodfellow, and
  R.~Fergus.
\newblock Intriguing properties of neural networks.
\newblock {\em CoRR}, abs/1312.6199, 2013.

\bibitem{Wang2015RSS}
D.~Z. Wang and I.~Posner.
\newblock Voting for voting in online point cloud object detection.
\newblock In {\em Proceedings of Robotics: Science and Systems}, Rome, Italy,
  July 2015.

\bibitem{Wang2018DGCNN}
Y.~{Wang}, Y.~{Sun}, Z.~{Liu}, S.~E. {Sarma}, M.~M. {Bronstein}, and J.~M.
  {Solomon}.
\newblock {Dynamic Graph CNN for Learning on Point Clouds}.
\newblock {\em arXiv e-prints}, Jan. 2018.

\bibitem{FGBBST}
M.~Wicker, X.~Huang, and M.~Kwiatkowska.
\newblock Feature-guided black-box safety testing of deep neural networks.
\newblock In {\em Tools and Algorithms for the Construction and Analysis of
  Systems - 24th International Conference, {TACAS} 2018}, pages 408--426, 2018.

\bibitem{GameBasedVerification}
M.~{Wu}, M.~{Wicker}, W.~{Ruan}, X.~{Huang}, and M.~{Kwiatkowska}.
\newblock {A Game-Based Approximate Verification of Deep Neural Networks with
  Provable Guarantees}.
\newblock {\em ArXiv e-prints}, July 2018.

\bibitem{ModelNet}
Z.~{Wu}, S.~{Song}, A.~{Khosla}, and J.~{Xiao}.
\newblock 3d shapenets: A deep representation for volumetric shapes.
\newblock In {\em (CVPR) 2015}, pages 1912--1920, June 2015.

\bibitem{VoxelNet}
Y.~Zhou and O.~Tuzel.
\newblock Voxelnet: End-to-end learning for point cloud based 3d object
  detection.
\newblock In {\em {CVPR} 2018}, pages 4490--4499, 2018.

\end{thebibliography}
}

\newpage
\appendix
\section{Further KITTI Evaluation}

The KITTI Dataset \cite{Geiger2012KITTI} is a collection of sensor readings from an autonomous vehicle in an urban environment. This includes point clouds from a Velodyne sensor that have been hand labeled with 3D bounding boxes. To assess classification networks on this data, we isolated all of the objects in each driving scene and created a classification task (determining if an object was a car, truck or other) comprised of 5,000 real-world point clouds. In order to extract these objects we parsed through driving sequences and isolated the point clouds from human-labeled bounding boxes that had sufficient density (i.e. easily recognizable objects).

In Table \ref{tab:table1} we show the initial accuracy of each method on the new data set in the first column. In subseqent columns we show how data occlusion affects the performance of the network (identical to the rows in the main paper). This is done for both white-box and black-box settings. The times reported are the mean times in seconds between the initial classification and the final adversarial example being returned. These times only reflect inputs which were initially classified correctly by the network.

\begin{table}[H]\footnotesize
\begin{center}
\begin{tabular}{|c|c|c|c|c|c|}
\hline
Architecture & 0\% Occl. & 25\% & 50\% & 75\% & time (s)  \\
\hline\hline
PointNet &  73\% & 49.5\% & 37.5\% & 0\% & 5.06 \\
PointNet-BB & 73\% & 49.5\% & 37.5\% & 0\% & 24.28 \\
PointNet-Rand & 73\% & 72.5\% & 72.0\% & 68.5\% & 3.54 \\
VoxNet & 71.5\% & 0\% & 0\% & 0\% & 1.16 \\
VoxNet-BB & 71.5\% & 0\% & 0\% & 0\% & 1.44 \\
VoxNet-Rand & 71.5\% & 55.5\% & 30.0\% & 13.5\% & 0.45 \\
\hline
\end{tabular}
\end{center}
\caption{Extension of Table 1 for KITTI setting only. `0\% Occl.'  denotes initial accuracy. Percentages represent the amount of data occluded. Rows marked BB are for the ISO algorithm operating in a black-box setting.}\label{tab:table1}
\end{table}

\end{document}